\documentclass[11pt,a4paper]{article}

\usepackage[utf8]{inputenc}
\usepackage[T1]{fontenc}
\usepackage[margin=1in]{geometry}
\usepackage{lmodern}
\usepackage{microtype}
\usepackage{amsmath}
\usepackage{amssymb}
\usepackage{amsthm}
\usepackage{booktabs}
\usepackage{enumitem}
\usepackage{csquotes}
\usepackage{tabularx}
\usepackage{ragged2e}
\usepackage{cite}
\usepackage[dvipsnames]{xcolor}
\usepackage{tikz}
\usetikzlibrary{shapes.geometric, arrows.meta, positioning, calc}
\usepackage{hyperref}
\hypersetup{
    colorlinks=true,
    linkcolor=MidnightBlue,
    citecolor=MidnightBlue,
    urlcolor=MidnightBlue
}
\usepackage{authblk}

\DeclareUnicodeCharacter{00A0}{~}

\newtheorem{proposition}{Proposition}

\title{\textbf{The Invisible Editorial Layer:}\\
Inference-Time Steering, Probability Placement, and the Attribution Problem in Deployed Language Models}

\author[1]{Augusto Camargo}

\affil[1]{\small Bluecore Consulting, São Paulo, Brazil}
\affil[ ]{\small \texttt{augusto.camargo@bluecore.com.br}}

\date{\small\today}

\begin{document}

\maketitle

\begin{abstract}

Evaluations of generative language models frequently interpret observable behavioral traits---such as political stance, brand inclination, and normative framing---as manifestations of model weights, post-training alignment, or prompting. This interpretation risks conflating a foundation model with the multi-layered production system through which its outputs are ultimately served.

Modern inference stacks increasingly support runtime interventions including activation engineering, decoding-time steering, retrieval augmentation, hidden system instructions, and logit manipulation. These mechanisms introduce operational layers between frozen model parameters and the generation observed by end users.

While controlled decoding and statistical watermarking demonstrate the feasibility of systematically modulating token distributions at inference time, the governance and attribution consequences of undisclosed runtime policies remain comparatively underexplored. We examine \emph{inference-time framing bias}: systematic runtime steering of generated text toward specific institutional, ideological, or commercial frames without requiring changes to the underlying model parameters.

We formalize the \emph{Inference Attribution Problem} and show that, under black-box observation alone, behaviorally equivalent deployed systems may arise from structurally distinct combinations of model parameters and inference policies. Consequently, observed bias does not uniquely identify the architectural layer responsible for it.

We further characterize \emph{Probability Placement} as a deployment pattern in which commercial influence is embedded within an ostensibly organic assistant response through systematic probability-mass reallocation. Unlike explicit token-auction mechanisms for generative advertising, Probability Placement concerns undisclosed steering of a general-purpose assistant's served distribution.

Finally, we discuss implications for auditing, confidential computing, attestation, Article~5 of the EU AI Act, the Digital Services Act, and advertising-disclosure principles. We argue that governance of generative systems must increasingly distinguish between auditing a model and auditing the deployed system that ultimately speaks.

\end{abstract}

\section{Introduction}

\begin{quote}
\textbf{Model $\neq$ Deployed System.}
\end{quote}

Large language models (LLMs) increasingly mediate access to public discourse, professional deliberation, search, recommendation, and commercial decision-making \cite{salvi2024conversational,hackenburg2025levers}. Consequently, a growing body of empirical and regulatory research focuses on auditing bias, hallucination, safety vulnerabilities, and ideological tendencies associated with model behavior \cite{kroger2025dont,yoo2025fair}.

However, behavioral observations made at a commercial API or conversational interface do not necessarily characterize the underlying foundation model in isolation. In real-world deployments, the model is only one component of a composite inference system \cite{casper2024transparency}.

The conventional abstraction treats generation as a direct mapping from input to output through standard autoregressive sampling. Production systems can instead incorporate multiple runtime transformation layers capable of modifying generation before token selection or during the model's forward computation.

\begin{figure}[ht]
\centering
\begin{tikzpicture}[
    node distance=1.0cm and 0.6cm,
    box/.style={
        rectangle,
        draw=MidnightBlue!80,
        fill=MidnightBlue!5,
        thick,
        minimum height=0.85cm,
        minimum width=1.6cm,
        align=center,
        font=\footnotesize
    },
    policybox/.style={
        rectangle,
        draw=BrickRed!80,
        fill=BrickRed!8,
        thick,
        minimum height=0.85cm,
        minimum width=2.0cm,
        align=center,
        font=\footnotesize
    },
    arrow/.style={
        -Stealth,
        thick,
        color=MidnightBlue!80
    }
]
    \node[box] (prompt) {Prompt\\$x$};
    \node[box, right=of prompt] (model) {Model\\$P_\theta(w_t \mid x)$};
    \node[box, right=of model] (logits) {Logits\\$z_t \in \mathbb{R}^{|\mathcal{V}|}$};
    \node[policybox, right=of logits] (policy) {Inference Policy\\$\mathcal{I}(z_t,\dots)$};
    \node[box, right=of policy] (sample) {Sampler};
    \node[box, right=of sample] (output) {Output\\Tokens $y$};

    \draw[arrow] (prompt) -- (model);
    \draw[arrow] (model) -- (logits);
    \draw[arrow] (logits) -- (policy);
    \draw[arrow] (policy) -- (sample);
    \draw[arrow] (sample) -- (output);
\end{tikzpicture}
\caption{A simplified production pipeline. An inference policy $\mathcal{I}$ may transform the token distribution before sampling without mutating the underlying model parameters $\theta$. Other runtime interventions may occur inside the forward pass, for example through activation steering.}
\label{fig:pipeline}
\end{figure}

Inference-time modification of autoregressive generation is technically mature. Methods such as Plug and Play Language Models \cite{dathathri2020plug}, GeDi \cite{krause2020gedi}, DExperts \cite{liu2021dexperts}, and FUDGE \cite{yang2021fudge} steer generation toward desired semantic attributes during decoding. Activation Engineering \cite{turner2023steering} and Inference-Time Intervention (ITI) \cite{li2023inference} demonstrate that internal representations can also be altered during inference while model parameters remain frozen.

In parallel, statistical watermarking systems \cite{kirchenbauer2023watermark,dathathri2024scalable} demonstrate that sampling distributions can be systematically perturbed at scale without requiring visible changes to the prompt or parameter updates.

These mechanisms are generally studied as tools for controllability, safety, truthfulness, detoxification, personalization, or provenance. Their existence, however, establishes a broader architectural possibility: the behavior presented by a deployed assistant can be systematically altered at runtime while remaining observationally difficult to distinguish from behavior originating in model weights or post-training.

This distinction matters for auditing. If an evaluator observes a systematic ideological, commercial, or institutional preference in generated text, the behavioral evidence alone does not reveal whether the preference originated in pre-training data, supervised fine-tuning, preference optimization, hidden instructions, retrieval, activation-level intervention, logit processing, or sampling configuration.

We refer to this structural ambiguity as the \emph{Inference Attribution Problem}.

\subsection{Contributions}

This paper develops a conceptual and formal framework for reasoning about undisclosed runtime steering in deployed language systems. Our contributions are:

\begin{enumerate}[leftmargin=*, noitemsep]

    \item \textbf{Runtime Steering as a Deployment-Layer Phenomenon:}
    We distinguish model-level behavior from system-level behavior and characterize inference-time interventions that can shift semantic framing without requiring permanent model-weight modification.

    \item \textbf{The Inference Attribution Problem:}
    We formalize the observational non-identifiability that arises when black-box auditors attempt to infer the architectural source of an observed behavioral bias.

    \item \textbf{Probability Placement:}
    We characterize a commercial deployment pattern in which sponsored influence is embedded in an ostensibly organic assistant response through probability-mass reallocation. We distinguish this phenomenon from prior token-auction mechanisms explicitly designed for generative advertising.

    \item \textbf{Detection--Attribution Separation:}
    We show why detecting a behavioral distributional shift does not, by itself, identify the runtime mechanism responsible for that shift.

    \item \textbf{Governance and Verification Criteria:}
    We discuss inference transparency, cryptographic attestation, confidential computing, and regulatory implications for systems in which the served inference pipeline---rather than only the model weights---is the relevant governance object.

\end{enumerate}

\section{Related Work}

\subsection{Controlled Text Generation and Steering Mechanisms}

Controlled generation directs autoregressive language models toward desired attributes or constraints without necessarily modifying their underlying parameters.

Plug and Play Language Models \cite{dathathri2020plug} steer generation through gradients from attribute models. GeDi \cite{krause2020gedi} uses generative discriminators to guide token selection. DExperts \cite{liu2021dexperts} modifies the decoding distribution through combinations of expert and anti-expert models, while FUDGE \cite{yang2021fudge} conditions next-token probabilities on predictors of desired future properties.

Recent work also explores representation-level interventions. Activation Engineering \cite{turner2023steering} demonstrates that high-level behavioral properties can be influenced through steering vectors applied to internal model representations. Inference-Time Intervention (ITI) \cite{li2023inference} modifies attention-head activations during inference while leaving model parameters frozen.

Direct logit-level methods further demonstrate that semantic and stylistic behavior can be altered by modifying pre-sampling token distributions \cite{an2026steering}.

Taken together, these approaches establish that the behavior of a frozen model can be materially altered through mechanisms that operate only during inference.

\subsection{Distributional Watermarking as Production Precedent}

Text watermarking provides an especially clear production precedent for systematic token-distribution modification.

Kirchenbauer et al. \cite{kirchenbauer2023watermark} introduced a statistical watermarking scheme that partitions the vocabulary into pseudorandomly determined preferred and non-preferred token sets and biases the sampling process toward the preferred set.

SynthID-Text \cite{dathathri2024scalable} extended distributional watermarking toward large-scale deployment. Such systems are not examples of ideological or commercial steering, but they establish an important architectural fact: a production serving layer can apply sustained and systematic perturbations to a language model's token-selection process while preserving overall generation quality.

\subsection{Generative Advertising and Token Auctions}

Prior work has already considered the economic use of token-level probability manipulation.

D\"utting et al. \cite{dutting2024token} formulate a \emph{token auction} mechanism for generative advertising. Advertisers submit bids and language-model distributions, and a mechanism combines these inputs to determine the token distribution from which sponsored generative content is produced.

This work is an important precedent for the economic interpretation of token probabilities. However, the deployment pattern examined in the present paper is different.

Token auctions explicitly model advertisers as participants in a mechanism that generates advertising content. By contrast, \emph{Probability Placement} refers to undisclosed commercial intervention within an otherwise general-purpose assistant, where sponsored influence is observationally blended with what users may interpret as the assistant's organic recommendation or judgment.

The distinction is therefore not whether token probabilities can carry commercial value, which prior work already establishes, but whether commercial influence is disclosed as advertising or instead embedded within the served distribution of an ostensibly neutral assistant.

\subsection{Conversational Persuasion and Framing}

Entman's framing theory \cite{entman1993framing} establishes that communication can influence interpretation not only through factual assertions but also through selective salience, causal emphasis, moral evaluation, and thematic framing.

Empirical studies increasingly demonstrate the persuasive capacity of conversational language systems. Hackenburg and Margetts \cite{hackenburg2024evaluating} study political microtargeting with LLM-generated communication. Salvi et al. \cite{salvi2024conversational} demonstrate through randomized experiments that personalized LLM conversations can alter user beliefs.

Hackenburg et al. \cite{hackenburg2025levers} further investigate the mechanisms governing political persuasion by conversational AI systems. Williams-Ceci et al. \cite{williamsceci2026biased} show that biased AI writing assistance can influence users' attitudes on societal issues.

These findings motivate attention not only to factual accuracy but also to the systematic framing choices embedded within conversational generation.

\section{Mechanisms of Inference-Time Framing}

\subsection{Formalizing Logit-Level Interventions}

Let a language model parameterized by weights $\theta$ define a conditional probability distribution over a discrete vocabulary $\mathcal{V}$:

\begin{equation}
P_\theta(w_t \mid x,w_{<t})
=
\frac{
\exp(z_t(w_t))
}{
\sum_{v \in \mathcal{V}}
\exp(z_t(v))
},
\end{equation}

where $x \in \mathcal{X}$ denotes the input context, $w_{<t}=(w_1,\dots,w_{t-1})$ represents previously generated tokens, and $z_t \in \mathbb{R}^{|\mathcal{V}|}$ denotes the raw logit vector.

A logit-level inference policy $\mathcal{I}$ may transform this distribution before sampling:

\begin{equation}
z'_t(w_t)
=
z_t(w_t)
+
\lambda
s_t(
w_t
\mid
x,
w_{<t},
\mathbf{u},
\mathbf{e}
),
\end{equation}

where:

\begin{itemize}[leftmargin=*, noitemsep]

    \item $s_t:\mathcal{V}\rightarrow\mathbb{R}$ is an external scoring function;

    \item $\mathbf{u}\in\mathcal{U}$ represents optional information associated with the user or interaction state;

    \item $\mathbf{e}\in\mathcal{E}$ represents an external steering objective;

    \item $\lambda\geq0$ determines the intervention magnitude.

\end{itemize}

The resulting served distribution is therefore:

\begin{equation}
P_{\theta,\mathcal{I}}
(w_t\mid x,w_{<t})
\propto
P_\theta(w_t\mid x,w_{<t})
\exp
\left[
\lambda
s_t(
w_t
\mid
x,
w_{<t},
\mathbf{u},
\mathbf{e}
)
\right].
\end{equation}

Statistical watermarking may derive $s_t$ from pseudorandom rules over the decoding context \cite{kirchenbauer2023watermark,dathathri2024scalable}. Semantic steering may instead derive $s_t$ from classifiers, latent representations, vocabulary projections, or other functions correlated with a desired semantic frame.

The technical distinction between these objectives is less important for the present argument than the architectural fact that the served probability distribution need not equal the base model distribution.

\subsection{Probabilistic Salience vs.\ Hard Suppression}

Traditional censorship or categorical moderation can be represented as hard suppression:

\begin{equation}
P_{\mathrm{censored}}
(w_t\in\mathcal{V}_{\mathrm{prohibited}})
=
0.
\end{equation}

Such interventions can generate identifiable boundaries because prohibited outputs become impossible.

Inference-time semantic steering need not operate in this manner. Let $F^+$ denote a favored framing and $F^-$ an alternative framing. A steering policy can establish:

\begin{equation}
P_{\theta,\mathcal{I}}(F^+\mid x)
>
P_\theta(F^+\mid x)
\end{equation}

and

\begin{equation}
P_{\theta,\mathcal{I}}(F^-\mid x)
<
P_\theta(F^-\mid x)
\end{equation}

without making $F^-$ impossible.

Consider a public-policy query concerning regulation. Two factually defensible narrative frames may emphasize different dimensions:

\begin{itemize}[leftmargin=*, noitemsep]

    \item \textbf{Safeguard framing:}
    consumer protection, risk mitigation, accountability, and long-term stability;

    \item \textbf{Burden framing:}
    compliance cost, administrative overhead, reduced flexibility, and economic friction.

\end{itemize}

An inference policy does not need to fabricate information to influence the resulting interpretation. It can instead systematically alter which facts, descriptors, examples, and causal relationships become most probable during generation.

\section{Deployment Paradigms and Threat Models}

\subsection{State-Enforced Framing Mandates}

Consider a hypothetical regulatory environment in which authorities require AI intermediaries to promote designated framing guidelines $G$ for selected public-policy topics:

\begin{equation}
z'_t
=
z_t
+
\lambda
s_G(w_t,\mathbf{e}_{\mathrm{state}}).
\end{equation}

The underlying parameters $\theta$ need not change. The deployed system can remain behaviorally ordinary on unrelated prompts while systematically altering interpretive salience on targeted topics.

\begin{figure}[ht]
\centering
\begin{tikzpicture}[
    box/.style={
        rectangle,
        draw=MidnightBlue!80,
        fill=MidnightBlue!5,
        thick,
        minimum height=1.1cm,
        minimum width=2.2cm,
        align=center,
        font=\small
    },
    statebox/.style={
        rectangle,
        draw=BrickRed!80,
        fill=BrickRed!8,
        thick,
        minimum height=1.1cm,
        minimum width=2.4cm,
        align=center,
        font=\small
    },
    arrow/.style={
        -{Stealth[length=2.5mm]},
        thick,
        color=MidnightBlue!80
    }
]
    \node[box] (query) {Citizen Query\\{\footnotesize e.g., Reform}};
    \node[box, right=1.0cm of query] (base) {Base Model\\$P_\theta$};
    \node[statebox, right=1.6cm of base] (mandate)
        {Runtime Policy $\mathcal{I}_G$\\$s_G(w_t,\mathbf{e}_{\mathrm{state}})$};
    \node[box, right=1.4cm of mandate] (output)
        {Framed Output\\$P_{\theta,\mathcal{I}_G}$};

    \draw[arrow] (query) -- (base);
    \draw[arrow] (base) -- (mandate)
        node[midway, above=3pt, font=\scriptsize\bfseries, text=MidnightBlue]
        {Logits};
    \draw[arrow] (mandate) -- (output)
        node[midway, above=3pt, font=\scriptsize\bfseries, text=BrickRed]
        {Served Distribution};
\end{tikzpicture}
\caption{A hypothetical state-enforced inference-steering architecture. Model weights remain unchanged while the deployed system systematically modifies generation at runtime.}
\label{fig:state_threat}
\end{figure}

The important governance question is therefore not limited to whether a model was trained on politically biased data. It also includes whether an otherwise unchanged model is embedded within a serving stack containing undeclared behavioral policies.

\subsection{Personalized Persuasion}

Runtime steering can also be conditioned on user or interaction state $\mathbf{u}$.

For illustration, an intervention strength may depend on inferred receptivity:

\begin{equation}
\lambda(\mathbf{u})
=
\begin{cases}
\lambda_{\mathrm{low}},
&
\text{for profiles estimated to resist the target frame},
\\
\lambda_{\mathrm{high}},
&
\text{for profiles estimated to be receptive to the target frame}.
\end{cases}
\end{equation}

This architecture enables individualized persuasive behavior without requiring separate model weights for each target group.

Such a system should not be interpreted as necessarily existing in current production platforms. It is instead a technically feasible deployment pattern implied by the combination of personalization systems, inference-time steering mechanisms, and evidence that conversational framing can influence users \cite{hackenburg2024evaluating,salvi2024conversational}.

\subsection{Commercial Framing: Probability Placement}

We use the term \emph{Probability Placement} to describe a deployment pattern in which commercial influence is embedded into the probability distribution of an otherwise general-purpose conversational assistant.

Suppose a generated sequence is:

\begin{equation}
y=(w_1,\dots,w_T).
\end{equation}

Its probability under a commercially influenced inference policy is:

\begin{equation}
P_{\theta,\mathcal{I}}
(y\mid x)
=
\prod_{t=1}^{T}
P_{\theta,\mathcal{I}}
(
w_t
\mid
x,w_{<t},\mathbf{e}_{\mathrm{commercial}}
).
\end{equation}

The defining characteristic is not merely that an advertiser can influence token probabilities. Token-auction mechanisms already establish that possibility \cite{dutting2024token}.

Instead, Probability Placement concerns the case where a user interacts with what appears to be an organic, general-purpose assistant while an undisclosed commercial policy systematically influences the distribution from which the assistant speaks.

\begin{figure}[ht]
\centering
\begin{tikzpicture}[
    scale=0.85,
    every node/.style={transform shape},
    barbase/.style={fill=MidnightBlue!20, draw=MidnightBlue, thick},
    barsteered/.style={fill=BrickRed!30, draw=BrickRed, thick},
    font=\small
]

    \draw[->, thick] (0,0) -- (6.2,0)
        node[right=2pt, font=\scriptsize] {Candidate};
    \draw[->, thick] (0,0) -- (0,3.6)
        node[above=2pt, font=\scriptsize] {$P(w)$};

    \node[font=\footnotesize\bfseries] at (3.0,3.4)
        {Base Distribution ($P_\theta$)};

    \draw[barbase] (0.3,0) rectangle (1.2,1.70);
    \draw[barbase] (1.7,0) rectangle (2.6,1.55);
    \draw[barbase] (3.1,0) rectangle (4.0,1.35);
    \draw[barbase] (4.5,0) rectangle (5.4,0.40);

    \node[above=2pt,font=\scriptsize] at (0.75,1.70) {0.34};
    \node[above=2pt,font=\scriptsize] at (2.15,1.55) {0.31};
    \node[above=2pt,font=\scriptsize] at (3.55,1.35) {0.27};
    \node[above=2pt,font=\scriptsize] at (4.95,0.40) {0.08};

    \node[below=5pt,font=\scriptsize] at (0.75,0) {Brand A};
    \node[below=5pt,font=\scriptsize] at (2.15,0) {Brand B};
    \node[below=5pt,font=\scriptsize] at (3.55,0) {Brand C};
    \node[below=5pt,font=\scriptsize] at (4.95,0) {Other};

    \begin{scope}[xshift=7.7cm]

        \draw[->, thick] (0,0) -- (6.2,0)
            node[right=2pt, font=\scriptsize] {Candidate};
        \draw[->, thick] (0,0) -- (0,3.6)
            node[above=2pt, font=\scriptsize] {$P_{\theta,\mathcal{I}}(w)$};

        \node[font=\footnotesize\bfseries] at (3.0,3.4)
            {Steered Distribution ($P_{\theta,\mathcal{I}}$)};

        \draw[barsteered] (0.3,0) rectangle (1.2,2.60);
        \draw[barsteered] (1.7,0) rectangle (2.6,1.20);
        \draw[barsteered] (3.1,0) rectangle (4.0,1.00);
        \draw[barsteered] (4.5,0) rectangle (5.4,0.20);

        \node[above=2pt,font=\scriptsize] at (0.75,2.60) {0.52};
        \node[above=2pt,font=\scriptsize] at (2.15,1.20) {0.24};
        \node[above=2pt,font=\scriptsize] at (3.55,1.00) {0.20};
        \node[above=2pt,font=\scriptsize] at (4.95,0.20) {0.04};

        \node[
            below=5pt,
            font=\scriptsize\bfseries,
            text=BrickRed
        ] at (0.75,0) {Brand A*};

        \node[below=5pt,font=\scriptsize] at (2.15,0) {Brand B};
        \node[below=5pt,font=\scriptsize] at (3.55,0) {Brand C};
        \node[below=5pt,font=\scriptsize] at (4.95,0) {Other};

    \end{scope}

\end{tikzpicture}

\caption{Illustrative Probability Placement. The distributions sum to one in both cases. An undisclosed inference policy shifts probability mass toward a commercially preferred entity while leaving competing entities possible. The numerical values are illustrative rather than empirical.}
\label{fig:prob_placement}
\end{figure}

Probability Placement can operate along several dimensions:

\begin{enumerate}[leftmargin=*, noitemsep]

    \item \textbf{Entity selection probability:}
    increasing the probability that a preferred product, company, or service is mentioned or ranked first;

    \item \textbf{Attribute association:}
    increasing the probability that preferred entities are paired with favorable descriptors such as \emph{reliable}, \emph{standard}, or \emph{easy to integrate};

    \item \textbf{Comparative salience:}
    increasing the probability that disadvantages of competing entities are surfaced while equivalent disadvantages of the preferred entity are omitted;

    \item \textbf{Recommendation persistence:}
    repeatedly favoring the same commercial entity across paraphrased or semantically equivalent queries.

\end{enumerate}

Unlike a conventional sponsored result, such an intervention need not produce a visually separable advertising unit. Its commercial effect may instead be embedded in the linguistic judgment presented by the assistant.

\section{The Inference Attribution Problem}

\subsection{Deployed Systems as Composite Functions}

The distribution observed at a production endpoint can be represented abstractly as:

\begin{equation}
P_{\mathrm{deployed}}
=
\mathcal{F}
\left(
\theta_{\mathrm{base}},
\mathcal{D}_{\mathrm{SFT}},
\mathcal{R}_{\mathrm{pref}},
\mathbf{x}_{\mathrm{sys}},
\mathcal{G}_{\mathrm{RAG}},
\mathcal{A}_{\mathrm{activation}},
\mathcal{I}_{\mathrm{logit}},
\mathcal{S}_{\mathrm{sampler}}
\right).
\end{equation}

Here:

\begin{itemize}[leftmargin=*, noitemsep]

    \item $\theta_{\mathrm{base}}$ denotes the underlying model parameters;

    \item $\mathcal{D}_{\mathrm{SFT}}$ denotes supervised fine-tuning effects;

    \item $\mathcal{R}_{\mathrm{pref}}$ denotes preference-optimization mechanisms such as RLHF or DPO;

    \item $\mathbf{x}_{\mathrm{sys}}$ denotes hidden system instructions;

    \item $\mathcal{G}_{\mathrm{RAG}}$ denotes retrieval-augmented context;

    \item $\mathcal{A}_{\mathrm{activation}}$ denotes runtime activation-level interventions;

    \item $\mathcal{I}_{\mathrm{logit}}$ denotes logit-processing policies;

    \item $\mathcal{S}_{\mathrm{sampler}}$ denotes the sampling configuration.

\end{itemize}

A behavioral auditor generally observes only samples from $P_{\mathrm{deployed}}$.

The decomposition responsible for those samples remains latent.

\subsection{Observational Non-Identifiability}

The attribution problem can be expressed more directly.

Let $P_\theta(w\mid x)$ denote a base model distribution for a fixed context $x$, and let $Q(w\mid x)$ denote some target served distribution with support contained in the support of $P_\theta$.

\begin{proposition}[Observational Non-Identifiability of Inference Steering]
\label{prop:nonidentifiability}

For any target distribution $Q(w\mid x)$ satisfying

\begin{equation}
Q(w\mid x)>0
\Rightarrow
P_\theta(w\mid x)>0,
\end{equation}

there exists a logit-level inference policy $\mathcal{I}$ such that

\begin{equation}
P_{\theta,\mathcal{I}}(w\mid x)
=
Q(w\mid x).
\end{equation}

Consequently, the served distribution $Q$ is observationally compatible both with:

\begin{enumerate}[label=(\alph*), noitemsep]

    \item a base model $P_\theta$ combined with a non-trivial inference policy $\mathcal{I}$; and

    \item a different model $P_{\theta'}=Q$ combined with the identity inference policy.

\end{enumerate}

Black-box observations of the served distribution alone therefore cannot uniquely identify whether the observed behavior originates from model parameters or runtime steering.

\end{proposition}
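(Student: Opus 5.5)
The construction is explicit: use the log-likelihood ratio as the steering score inside the logit-level form $z'_t = z_t + \lambda s_t$ introduced in the section on formalizing logit-level interventions. Fix the context $x$ and write $S_P=\{w: P_\theta(w\mid x)>0\}$ and $S_Q=\{w: Q(w\mid x)>0\}$. By hypothesis $S_Q\subseteq S_P$. Take $\lambda=1$ and define
\begin{equation*}
s(w\mid x) = \log Q(w\mid x) - \log P_\theta(w\mid x) \quad \text{for } w\in S_Q,
\end{equation*}
and $s(w\mid x)=-\infty$ for $w\notin S_Q$. The value $-\infty$ is the hard-suppression limit already discussed for $\mathcal{V}_{\mathrm{prohibited}}$, so it is an admissible logit-processing policy. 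Since $P_\theta$ is a softmax of finite logits, $S_P$ is in fact all of $\mathcal{V}$, so $s$ is finite on all of $S_Q$.

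The first step is to verify the identity. Substituting into the served-distribution formula gives $P_{\theta,\mathcal{I}}(w\mid x)\propto P_\theta(w\mid x)\exp[s(w\mid x)]$. This equals $Q(w\mid x)$ on $S_Q$ and $0$ off $S_Q$. The normalizing constant is $\sum_w Q(w\mid x)=1$, so $P_{\theta,\mathcal{I}}=Q$ exactly. Equivalently, in logit terms, $z'(w)=\log Q(w\mid x)+c$ for any constant $c$, and the softmax is shift-invariant.

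The main technical point is the boundary case where $Q$ assigns zero mass to some tokens. A finite $\lambda s$ cannot produce exact zeros. I would handle this in one of two ways. One option is to allow extended-real scores, as above. The other is to state that, with finite scores only, $P_{\theta,\mathcal{I}}$ can be made arbitrarily close to $Q$ in total variation, by taking $s=-M$ off $S_Q$ with $M\to\infty$. I would state the extended-real convention explicitly, since the paper's own hard-suppression example already uses exact zeros. The same construction extends to sequences token by token: apply it at each step $t$ with context $(x,w_{<t})$, targeting the conditionals of $Q$, and the product formula then recovers $Q(y\mid x)$.

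For the "consequently" part, (a) follows from the construction just given. For (b), take a model $\theta'$ with $P_{\theta'}=Q$, that is, logits $z'=\log Q$, and pair it with the identity policy. This presumes the model class can realize $Q$. The softmax parametrization realizes any full-support $Q$, and $Q$ with zeros again needs the extended-real convention. The two systems $(\theta,\mathcal{I})$ and $(\theta',\mathrm{id})$ then induce the same law on outputs. Every black-box statistic is a functional of that law, so it takes equal values on both systems, which gives non-identifiability. I would also note that whenever $Q\neq P_\theta$ the policy $\mathcal{I}$ is non-trivial, and if $Q=P_\theta$ the claim is vacuous.
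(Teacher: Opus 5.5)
Your proposal is correct and follows the paper's proof. Both set the steering score to the log-likelihood ratio $s(w)=\frac{1}{\lambda}\log\frac{Q(w\mid x)}{P_\theta(w\mid x)}$ (you fix $\lambda=1$), use the normalization of $Q$ to conclude $P_{\theta,\mathcal{I}}=Q$, and pair $P_{\theta'}=Q$ with the identity policy. Your handling of tokens with $Q(w\mid x)=0$ (via $-\infty$ scores or a total-variation limit), and your remarks on realizability and the sequence-level extension, make explicit points the paper's proof leaves implicit.
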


\begin{proof}

Consider the logit transformation:

\begin{equation}
z'(w)
=
z(w)
+
\lambda s(w),
\end{equation}

with

\begin{equation}
s(w)
=
\frac{1}{\lambda}
\log
\frac{
Q(w\mid x)
}{
P_\theta(w\mid x)
}.
\end{equation}

The corresponding served distribution is:

\begin{align}
P_{\theta,\mathcal{I}}(w\mid x)
&\propto
P_\theta(w\mid x)
\exp
\left(
\lambda s(w)
\right)
\\
&=
P_\theta(w\mid x)
\exp
\left(
\log
\frac{
Q(w\mid x)
}{
P_\theta(w\mid x)
}
\right)
\\
&=
Q(w\mid x).
\end{align}

Since $Q$ is already normalized,

\begin{equation}
P_{\theta,\mathcal{I}}(w\mid x)
=
Q(w\mid x).
\end{equation}

Now consider a second system whose base model directly implements

\begin{equation}
P_{\theta'}(w\mid x)=Q(w\mid x)
\end{equation}

and whose runtime inference policy is the identity transformation.

Both systems therefore expose the same observable distribution despite having different internal causal structures.

\end{proof}

The proposition is deliberately simple. Its significance is architectural rather than algorithmic.

If two structurally distinct implementations produce the same observable probability law, no amount of output-only observation can distinguish them without additional assumptions, privileged access, reference execution, instrumentation, or attestation.

This yields the central distinction:

\begin{equation}
\boxed{
\text{Behavioral evidence}
\neq
\text{architectural attribution}
}
\end{equation}

The result extends conceptually beyond logit policies. Hidden prompts, retrieval augmentation, post-training, activation steering, and sampling configuration can similarly create observationally overlapping behavioral distributions.

\subsection{Behavioral Equivalence Classes}

For a deployed distribution $Q$, define an equivalence class of implementations:

\begin{equation}
[\mathcal{M}]_Q
=
\left\{
\mathcal{M}_i :
P_{\mathcal{M}_i}(y\mid x)=Q(y\mid x)
\right\}.
\end{equation}

A black-box auditor observes membership in the behavioral equivalence class but not the specific implementation responsible for the output.

The relevant inference problem is therefore not merely:

\begin{equation}
\text{Does the system exhibit bias?}
\end{equation}

but:

\begin{equation}
\text{Which component of the deployed system causes the observed bias?}
\end{equation}

These are fundamentally different questions.

\subsection{Operational Trade-offs}

Table~\ref{tab:intervention_comparison} summarizes qualitative differences across common steering mechanisms.

\begin{table}[ht]
\centering
\small

\caption{Qualitative comparison of behavioral steering mechanisms across the deployment stack. Exact latency and observability depend on implementation.}

\label{tab:intervention_comparison}

\begin{tabularx}{\textwidth}{@{}
>{\RaggedRight\hsize=1.2\hsize}X
>{\Centering\hsize=0.8\hsize}X
>{\Centering\hsize=1.0\hsize}X
>{\Centering\hsize=1.2\hsize}X
>{\Centering\hsize=1.0\hsize}X@{}}

\toprule

\textbf{Intervention Layer}
&
\textbf{Weight Mutation}
&
\textbf{Context Token Cost}
&
\textbf{Potential Textual Trace}
&
\textbf{Typical Serving Overhead}
\\

\midrule

RLHF / DPO
&
Yes
&
None
&
None
&
None at inference beyond model itself
\\

Hidden System Prompt
&
No
&
Linear in prompt length
&
Possible through extraction or leakage
&
Low
\\

RAG
&
No
&
Linear in retrieved context
&
Possible through retrieved content
&
Low to High
\\

Activation Intervention
&
No
&
None
&
No prompt artifact
&
Low to Moderate
\\

\textbf{Logit Policy $\mathcal{I}$}
&
\textbf{No}
&
\textbf{None}
&
\textbf{No prompt artifact}
&
\textbf{Low to Moderate}
\\

\bottomrule

\end{tabularx}
\end{table}

Logit-level steering has a notable property: it need not leave textual traces inside the model's context window.

This does not imply that it is undetectable. Timing measurements, log-probability access, controlled differential experiments, internal instrumentation, compromised infrastructure, or privileged audit access may expose or constrain the existence of a runtime policy.

The narrower claim is that logit-level policies can alter generation without introducing prompt tokens that an ordinary user can inspect or extract.

\section{Auditing, Detection, and Attribution}

\subsection{Detection Is Not Attribution}

A behavioral audit may establish that:

\begin{equation}
P_{\mathrm{served}}
\neq
P_{\mathrm{reference}}.
\end{equation}

This is evidence of behavioral divergence.

It does not establish:

\begin{equation}
\mathcal{I}_{\mathrm{logit}}
\neq
\operatorname{id}.
\end{equation}

The divergence could instead arise from a different model checkpoint, post-training configuration, system prompt, retrieval policy, activation intervention, or sampling configuration.

Thus:

\begin{equation}
\boxed{
\text{Detection of behavioral shift}
\neq
\text{attribution of intervention locus}
}
\end{equation}

This distinction is particularly important for black-box ideological-bias audits. Such methods can measure systematic behavioral asymmetries \cite{kroger2025dont}, but without privileged architectural information they cannot necessarily determine where within the serving stack those asymmetries originate.

\subsection{Distributional Divergence Metrics}

Suppose a reference execution environment provides $P_{\mathrm{ref}}$ and production exposes $P_{\mathrm{served}}$.

Where token probabilities are accessible, divergence may be quantified using Kullback--Leibler divergence:

\begin{equation}
D_{\mathrm{KL}}
\left(
P_{\mathrm{served}}(\cdot\mid x)
\parallel
P_{\mathrm{ref}}(\cdot\mid x)
\right)
=
\sum_{w\in\mathcal{V}}
P_{\mathrm{served}}(w\mid x)
\log
\frac{
P_{\mathrm{served}}(w\mid x)
}{
P_{\mathrm{ref}}(w\mid x)
}.
\end{equation}

Total Variation distance provides another measure:

\begin{equation}
\delta_{\mathrm{TV}}
\left(
P_{\mathrm{served}},
P_{\mathrm{ref}}
\right)
=
\frac{1}{2}
\sum_{w\in\mathcal{V}}
\left|
P_{\mathrm{served}}(w\mid x)
-
P_{\mathrm{ref}}(w\mid x)
\right|.
\end{equation}

However, many commercial endpoints expose only sampled text.

Under such conditions, auditors may instead estimate semantic distributional shifts across repeated generations. For competing frames $F^+$ and $F^-$, define:

\begin{equation}
\Delta\mathcal{S}(x,F)
=
\mathbb{E}_{y\sim P_{\mathrm{served}}}
\left[
\cos(
\mathcal{E}(y),
\mathcal{E}(F^+)
)
-
\cos(
\mathcal{E}(y),
\mathcal{E}(F^-)
)
\right],
\end{equation}

where $\mathcal{E}(\cdot)$ denotes a validated sentence-level representation.

Repeated measurements across paraphrases, languages, geographic origins, account states, and randomized interaction histories can help identify systematic behavioral asymmetries.

Yet even statistically convincing asymmetry remains evidence about the served system, not necessarily its architectural provenance.

\section{Verifiable Inference and Runtime Transparency}

\subsection{Beyond Behavioral Auditing}

Because black-box observation alone cannot generally resolve implementation-level attribution, stronger governance models may require additional observability.

One approach is to expose cryptographically verifiable information about the inference environment.

A possible \emph{Inference Policy Transparency} framework could combine:

\begin{enumerate}[leftmargin=*, noitemsep]

    \item \textbf{Measured Execution Environments:}
    critical model-serving components execute inside Trusted Execution Environments or other confidential-computing systems capable of remote attestation;

    \item \textbf{Model Identity Attestation:}
    the deployed environment exposes a cryptographic commitment to the model checkpoint or weight set being executed;

    \item \textbf{Inference-Policy Attestation:}
    the system commits to the version or hash of active activation, logit-processing, sampling, and filtering policies;

    \item \textbf{Policy Change Logging:}
    modifications to runtime policies are signed and recorded in a tamper-evident audit log;

    \item \textbf{Independent Policy Review:}
    authorized auditors evaluate whether the attested runtime configuration corresponds to the declared behavioral policy.

\end{enumerate}

Conceptually, an attestation may bind:

\begin{equation}
R
=
\operatorname{Sign}_{K}
\left(
H(\theta),
H(\mathcal{I}),
H(\mathcal{S}),
H(C),
t
\right),
\end{equation}

where $H(\theta)$ denotes the model commitment, $H(\mathcal{I})$ the inference-policy commitment, $H(\mathcal{S})$ the sampler configuration, $H(C)$ the measured serving code, and $t$ a timestamp or execution epoch.

\subsection{The Limits of Attestation}

Attestation does not solve the normative problem by itself.

A trusted execution environment may establish that a specific runtime policy executed. It does not establish that the policy is politically neutral, commercially fair, scientifically justified, or legally permissible.

In other words:

\begin{equation}
\boxed{
\text{Attestation proves execution identity, not policy neutrality.}
}
\end{equation}

Cryptographic verification therefore complements rather than replaces institutional oversight.

The governance value of attestation lies in reducing one dimension of uncertainty: whether the system executed the declared inference stack.

Human, legal, or regulatory evaluation is still required to determine whether the declared policy itself is acceptable.

\section{Regulatory Implications}

\subsection{EU AI Act}

Article~5(1)(a) of the EU AI Act \cite{eu_ai_act_2024} prohibits certain AI practices involving subliminal, purposefully manipulative, or deceptive techniques when the statutory conditions for behavioral distortion and harm are satisfied.

Inference-time framing raises a difficult boundary question.

A subtle probability shift may influence language without producing an individually obvious or immediately measurable injury:

\begin{equation}
\operatorname{Impact}_i
\approx
\epsilon.
\end{equation}

At very large scale, however, repeated effects may aggregate:

\begin{equation}
\sum_{i=1}^{N}
\operatorname{Impact}_i
\gg
\epsilon.
\end{equation}

This does not imply that undisclosed inference steering automatically violates Article~5. Applicability depends on the statutory elements, factual circumstances, purpose of the intervention, affected population, and legally relevant consequences.

The more general governance issue is that probabilistic framing may be difficult to map onto legal frameworks originally designed around more visible forms of manipulation or discrete decision-making.

\subsection{Digital Services Act}

The Digital Services Act \cite{eu_dsa_2022} provides a useful transparency analogy.

Its recommender-system provisions recognize that ranking and information-selection mechanisms can shape what users see even when the underlying content remains available.

Conversational assistants complicate this model because retrieval, ranking, synthesis, framing, and recommendation can be collapsed into a single generated response.

A future transparency regime for conversational systems could therefore require disclosure not only of retrieval or ranking parameters but also of material runtime policies that systematically affect which entities, arguments, or frames are favored during generation.

The claim is not that existing DSA provisions necessarily impose such requirements on every inference-time intervention. Rather, recommender-system transparency provides an institutional model for thinking about generative systems whose outputs are shaped by non-visible selection mechanisms.

\subsection{Commercial Disclosure and Advertising Principles}

Commercial Probability Placement also raises questions familiar from advertising law.

FTC endorsement guidance \cite{ftc_endorsement_2023} emphasizes disclosure where material commercial relationships may affect how consumers interpret endorsements or recommendations.

Traditional advertising generally creates some distinction between editorial content and sponsored content.

Conversational systems can collapse that distinction.

If a general-purpose assistant presents a recommendation in its own unified voice while undisclosed commercial policies alter which products appear, how they are characterized, or how competing products are framed, the relevant governance problem is not merely ad placement but \emph{editorial provenance}.

This creates a potential future disclosure principle:

\begin{quote}
When commercial consideration materially influences a generative system's recommendation distribution, users should be able to distinguish sponsored influence from the system's otherwise organic generation process.
\end{quote}

The precise legal obligations associated with such a principle vary by jurisdiction and deployment context. The broader point is architectural: conventional sponsorship disclosures assume an observable advertising object, whereas Probability Placement may operate within the distribution that constructs the assistant's own narrative.

\section{Discussion}

\subsection{From Model Audits to System Audits}

The Inference Attribution Problem suggests a change in the unit of analysis used by AI auditing.

A model audit asks:

\begin{equation}
\text{What behavior is encoded or elicited by } M_\theta?
\end{equation}

A deployed-system audit asks:

\begin{equation}
\text{What behavior is ultimately produced by the entire serving stack?}
\end{equation}

These questions overlap, but they are not equivalent.

A model checkpoint can behave differently across providers, regions, user cohorts, account states, product tiers, or time periods if surrounding inference policies differ.

Conversely, two distinct model checkpoints may be configured to produce behaviorally similar outputs through runtime interventions.

Therefore:

\begin{equation}
\boxed{
\text{Auditing the model is not auditing the system that speaks.}
}
\end{equation}

\subsection{Temporal Attribution}

Runtime steering also introduces a temporal dimension.

Let the deployed policy be indexed by time:

\begin{equation}
\mathcal{I}_t.
\end{equation}

The same nominal model version can then produce different behavioral distributions at two dates:

\begin{equation}
P_{\theta,\mathcal{I}_{t_1}}
\neq
P_{\theta,\mathcal{I}_{t_2}}.
\end{equation}

Behavioral audits therefore need reproducibility information not only about model identity but also about deployment configuration and time.

A statement such as ``Model X exhibited bias B'' may be underspecified if the relevant behavior was actually produced by a mutable serving environment.

\subsection{Scope and Limitations}

This paper is conceptual and does not establish that major production language-model providers currently deploy undisclosed political or commercial logit-steering mechanisms.

The threat models described here demonstrate architectural feasibility, not evidence of actual misconduct.

Similarly, the non-identifiability result establishes limits on black-box causal attribution in the general case. Specific deployments may expose additional information---such as log probabilities, open weights, reproducible checkpoints, policy documentation, or auditable code---that substantially reduces the attribution problem.

Future empirical work should investigate practical protocols for distinguishing classes of runtime intervention under partial observability.

\section{Future Research}

Several directions follow from this framework.

\subsection{Differential Deployment Auditing}

If auditors can obtain both a reference model and a production endpoint, controlled prompts may reveal systematic divergence:

\begin{equation}
\Delta(x)
=
D(
P_{\mathrm{production}}(\cdot\mid x),
P_{\mathrm{reference}}(\cdot\mid x)
).
\end{equation}

Experiments could test whether divergence concentrates around political topics, commercial entities, demographic attributes, geographic regions, or account-specific features.

\subsection{Counterfactual Brand Audits}

Probability Placement can be evaluated through symmetry tests.

For competing brands $A$ and $B$, an auditor can compare semantically mirrored prompts:

\begin{align}
x_A &= \text{``Compare Brand A with Brand B.''}
\\
x_B &= \text{``Compare Brand B with Brand A.''}
\end{align}

Repeated sampling can estimate:

\begin{equation}
P(A\text{ recommended}\mid x_A,x_B)
\end{equation}

and test whether brand preference persists after controlling for prompt order and factual attributes.

\subsection{Semantic Framing Benchmarks}

Future benchmarks could define paired framing axes such as:

\begin{itemize}[leftmargin=*, noitemsep]

    \item innovation vs.\ risk;

    \item regulation vs.\ burden;

    \item security vs.\ liberty;

    \item labor protection vs.\ labor flexibility;

    \item market leader vs.\ incumbent;

    \item open ecosystem vs.\ fragmented ecosystem.

\end{itemize}

The objective would not be to declare one frame neutral but to measure whether deployment systems systematically and reproducibly privilege one frame over its alternatives.

\subsection{Inference Provenance Standards}

Standardized provenance metadata could eventually complement model cards.

A deployment manifest might identify:

\begin{equation}
\mathcal{M}_{\mathrm{deployment}}
=
\{
H(\theta),
H(\mathbf{x}_{\mathrm{sys}}),
H(\mathcal{G}),
H(\mathcal{A}),
H(\mathcal{I}),
H(\mathcal{S})
\}.
\end{equation}

Such a manifest would not necessarily expose proprietary policy contents publicly. It could instead provide verifiable commitments allowing authorized auditors to establish whether a deployment changed between evaluation and production.

\section{Conclusion}

The decoupling of foundation-model parameters from production serving behavior is a consequential architectural feature of modern language systems.

Controlled generation, activation steering, decoding-time interventions, and statistical watermarking demonstrate that the text observed by users can be systematically modified during inference without requiring changes to the underlying model weights.

This observation leads to the \emph{Inference Attribution Problem}. Behavior observed through a black-box interface does not, in general, uniquely identify the architectural layer responsible for that behavior. Structurally distinct systems can be observationally equivalent at their outputs.

This distinction matters for both empirical auditing and governance.

A behavioral shift can be detected without its causal locus being identified. Commercial influence can potentially be embedded inside an assistant's generated distribution rather than presented as a separable advertising object. Political or institutional framing can theoretically be introduced at deployment time even when the underlying model checkpoint remains unchanged.

We characterize one commercially relevant instance of this phenomenon as \emph{Probability Placement}: undisclosed probability-level influence within an ostensibly organic assistant response. The concept builds on, but is distinct from, prior token-auction mechanisms in which advertisers explicitly participate in generative advertising markets.

These phenomena suggest that governance frameworks should increasingly treat the deployed inference pipeline---not only the foundation model---as the object requiring transparency and auditability.

The central implication is therefore simple:

\begin{equation}
\boxed{
\text{Model behavior is not necessarily deployed-system behavior.}
}
\end{equation}

And consequently:

\begin{equation}
\boxed{
\text{Auditing the model is not auditing the system that speaks.}
}
\end{equation}


\bibliographystyle{unsrt}

\end{document}